\newtheorem{theorem}{Theorem}
\def\BibTeX{{\rm B\kern-.05em{\sc i\kern-.025em b}\kern-.08em
		T\kern-.1667em\lower.7ex\hbox{E}\kern-.125emX}}
\begin{document}
	
	\title{Hadamard-Riemannian Optimization \\for Margin-Variance Ensemble}
	
    \author{
    \IEEEauthorblockN{Zexu Jin}
    \IEEEauthorblockA{{\fontsize{9pt}{11pt}\selectfont\textit{School of Mathematics}} \\
    \textit{Nanjing University}\\
    Jiangsu, China \\
    zexujin@smail.nju.edu.cn}
    }
	
    \maketitle
    \begin{abstract}
Ensemble learning has been widely recognized as a pivotal technique for boosting predictive performance by combining multiple base models. Nevertheless, conventional margin-based ensemble methods predominantly focus on maximizing the expected margin while neglecting the critical role of margin variance, which inherently restricts the generalization capability of the model and heightens its vulnerability to overfitting, particularly in noisy or imbalanced datasets. Additionally, the conventional approach of optimizing ensemble weights within the probability simplex often introduces computational inefficiency and scalability challenges, complicating its application to large-scale problems. To tackle these limitations, this paper introduces a novel ensemble learning framework that explicitly incorporates margin variance into the loss function. Our method jointly optimizes the negative expected margin and its variance, leading to enhanced robustness and improved generalization performance. Moreover, by reparameterizing the ensemble weights onto the unit sphere, we substantially simplify the optimization process and improve computational efficiency. Extensive experiments conducted on multiple benchmark datasets demonstrate that the proposed approach consistently outperforms traditional margin-based ensemble techniques, underscoring its effectiveness and practical utility.
\end{abstract}
    \begin{IEEEkeywords}
Ensemble Learning, Riemannian Optimization, Margin Variance Regularization, Hadamard Parameterization, Simplex Constraints.
\end{IEEEkeywords}
    \section{Introduction}
Ensemble learning\cite{dong_survey_2020},\cite{Polikar2012}, has established itself as a cornerstone of modern machine learning by significantly improving predictive performance through the strategic combination of multiple base models. Contemporary state-of-the-art approaches can be broadly categorized into homogenous and heterogeneous ensembles\cite{bian_diversity_2007}. Homogenous methods, such as Random Forests\cite{breiman_random_2001, Cutler2012, SPEISER201993} and gradient boosting machines\cite{bentejac_comparative_2021},\cite{pmlr-v108-lu20a}, construct a strong learner by aggregating a large number of weak learners of the same type. Conversely, heterogeneous ensembles leverage the complementary strengths of diverse model architectures to achieve superior performance\cite{10888549}. The theoretical underpinnings of these methods are often explained by their ability to reduce variance\cite{9226504}, optimize bias, and exploit model complementarity, thereby enhancing the overall generalization capability on unseen data.

Among the various theoretical frameworks for understanding ensembles, margin-based analysis has attracted significant attention\cite{yuan2024achieving,huang_large_2021,guenov_margin_2018}. This theory posits that the generalization error of a combined classifier is bounded by its margin distribution, which measures the confidence of classification predictions. Consequently, a substantial body of research has been devoted to designing ensemble methods that explicitly maximize the prediction margin\cite{7299663, 7965869, 7324382}.

However, despite their theoretical appeal, conventional margin-based ensemble methods suffer from two major and interconnected limitations. First, an overwhelming majority of these techniques myopically focus on maximizing the \textit{expected} or \textit{average} margin across the training set\cite{10888549}. This singular focus often comes at the cost of ignoring the \textit{variance} of the margin distribution. This neglect renders the model susceptible to overfitting, particularly on noisy data and outliers\cite{ying_overview_2019}, and leads to unstable predictions on unseen examples, ultimately compromising generalization. Second, from an optimization perspective, existing approaches typically constrain the ensemble weights to the probability simplex. Although intuitive, this constraint requires computationally expensive projection operations after each gradient step during optimization\cite{wang_projection_2013,7814152}. This requirement not only hinders training efficiency but also severely limits the scalability of these methods to large-scale datasets and complex model architectures\cite{9165233}.

To address these critical issues in a unified way, we propose a robust margin-aware ensemble learning framework with the following key innovations:

\begin{itemize}
\item \textbf{Margin Distribution-Optimized Loss:} We design a novel loss function that jointly optimizes both the expected margin and its variance. This promotes large and consistent margins across all samples, significantly improving robustness against adversarial examples.

\item \textbf{Sphere-Based Reparameterization:} We reparameterize the ensemble weights onto a unit sphere. This eliminates the computational overhead of simplex projections, substantially improving optimization efficiency.

\item \textbf{Riemannian Gradient Optimization:} We introduce Riemannian optimization methods to ensemble learning, which approach fully exploits the geometric characteristics of the spherical constraint.
\end{itemize}

Extensive experimental results demonstrate that our proposed algorithm exhibits significant advantages in generalization performance, training efficiency, and robustness, thus conclusively validating its effectiveness.






    \section{Method}
\label{sec:method}

Consider a $c$-class classification problem with $m$ base classifiers $g_1, \dots, g_m$ and $n$ instances $x_1, \dots, x_n$, each with a one-hot encoded label vector $Y_i \in \mathbb{R}^c$. The complete label matrix is $Y \in \mathbb{R}^{c \times n}$.

Each classifier is assigned a weight $w_i \geq 0$ with $\sum_{i=1}^m w_i = 1$. Let $W = [w_1, \dots, w_m]^\top \in \mathbb{R}^m$ be the weight vector. For instance $x_i$, let $G_i = [g_1(x_i), \dots, g_m(x_i)] \in \mathbb{R}^{c \times m}$ denote the prediction matrix from all classifiers.

The ensemble prediction is:
\begin{equation}
y_i = \mathop{\arg\max}\limits_{j = 1, \dots, c} (G_i W).
\end{equation}

We employ a margin-based approach for weight optimization. The margin for instance $x_i$ is defined as:
\begin{equation}
\text{margin}_i = (G_i W)^\top Y_i - \max\left(G_i W - Y_i \odot (G_i W)\right)
\end{equation}
where $\odot$ denotes element-wise product. A larger margin indicates greater classification confidence, while a negative margin indicates misclassification\cite{NEURIPS2020_c6102b37}.

To address non-smoothness, we approximate the max function using log-sum-exp\cite{9264376}:
{\small
\begin{equation}
\text{log-sum-exp}(f)=\frac{1}{\alpha} \log \sum_{j=1}^c \exp (\alpha f)
\end{equation}
}
where $\alpha > 0$ is a temperature parameter. This yields the smoothed margin\cite{wang2025cooperation}, $m_i=\text{margin}_i$:
{\small
\begin{equation}
m_i = (G_i W)^\top Y_i 
 - \frac{1}{\alpha} \log \left( \sum_{j=1}^{c} \exp\left(\alpha \left(G_i W - Y_i \odot (G_i W)\right)_j\right) \right)
\end{equation}
}

To improve generalization and prevent overfitting, we incorporate the variance of the margin into the loss function\cite{5586027} with a regularization parameter $\lambda \geq 0$. The complete loss function is formulated as:
\begin{equation}
\mathcal{L}(W) = -\frac{1}{n} \sum_{i=1}^n m_i + \lambda \left( \frac{1}{n} \sum_{i=1}^n m_i^2 - \left( \frac{1}{n} \sum_{i=1}^n m_i \right)^2 \right)
\end{equation}
This objective function simultaneously maximizes the average margin while controlling its variability, leading to improved robustness and generalization performance\cite{kanno_three_2020}.

    \section{OPTIMIZATION ALGORITHM}

This section details the employed optimization methods. After obtaining the loss function, we need to optimize the weights using a gradient projection method. Due to the $O(n \log n)$ time complexity of projecting onto the unit simplex, which makes projection computations computationally expensive,we introduce the Hadamard product parameterization, i.e., let \( w = z \odot z \), which transforms the simplex constraint into a spherical constraint\cite{li_simplex_2023}. 

We now prove the equivalence\cite{chen_kkt_2022},\cite{bergmann_intrinsic_2019} between these two optimization formulations:

\begin{theorem}[Simplex-Sphere Equivalence]
    Let \( f: \mathbb{R}^m \to \mathbb{R} \in C^2\). The problems
    \begin{equation}
    \min_{w \ge 0,\ \sum_i w_i = 1} f(w) \quad \text{and} \quad \min_{\|z\|_2 = 1} f(z \odot z)
    \end{equation}
	with \( w = z \odot z = (z_1^2, \dots, z_m^2) \) have equivalent first- and second-order KKT conditions.
\end{theorem}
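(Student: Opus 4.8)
The plan is to exploit the fact that the map $\phi(z) = z \odot z$ sends the unit sphere $\{\|z\|_2 = 1\}$ exactly onto the simplex $\{w \ge 0,\ \sum_i w_i = 1\}$, so the two feasible sets are identified through this quadratic change of coordinates. I would write $F(z) := f(z \odot z)$ and form the two Lagrangians: $L_{\mathrm{smp}}(w,\mu,\nu) = f(w) + \mu\left(\sum_i w_i - 1\right) - \sum_i \nu_i w_i$ with $\nu_i \ge 0$ for the simplex, and $L_{\mathrm{sph}}(z,\eta) = F(z) + \eta\left(\|z\|_2^2 - 1\right)$ for the sphere. The computational backbone is the chain rule: since $w_i = z_i^2$, one obtains $\nabla F(z) = 2\,z \odot \nabla f(w)$ and, differentiating once more, $\nabla^2 F(z) = 2\,\diag(\nabla f(w)) + 4\,\diag(z)\,\nabla^2 f(w)\,\diag(z)$. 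I would establish these two identities first, since everything else is read off from them.

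For the first-order correspondence I would examine the sphere stationarity condition $\nabla F(z) + 2\eta z = 0$, which reads $2 z_i(\partial_i f(w) + \eta) = 0$ coordinatewise. On the support $S = \{i : w_i > 0\} = \{i : z_i \ne 0\}$ this forces $\partial_i f(w) = -\eta$, which is precisely the simplex stationarity $\partial_i f(w) = -\mu$ together with complementary slackness $\nu_i = 0$, under the identification $\mu = \eta$. The subtle point, which I would flag explicitly, is that off the support ($z_i = 0$) the sphere's first-order condition is vacuous, so the dual-feasibility inequality $\partial_i f(w) \ge -\mu$ (equivalently $\nu_i \ge 0$) is invisible at first order on the sphere side and must be recovered from curvature.

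The second-order step is where the two conditions truly match. Restricting $\nabla^2 L_{\mathrm{sph}} = \diag(v) + 4\,\diag(z)\,\nabla^2 f(w)\,\diag(z)$, whose diagonal $v_i = 2(\partial_i f(w)+\eta)$ vanishes on $S$, to the tangent space $\{d : z^\top d = 0\}$ and splitting $d$ into its support and off-support parts, I would argue two things. First, taking $d$ supported on a single off-support index (which automatically satisfies the tangent constraint since $z_i = 0$) yields $v_i \ge 0$, i.e. $\partial_i f(w) \ge -\eta$, exactly the missing dual-feasibility inequality. Second, on the support I would introduce the substitution $e = 2\,z \odot d$, which is a bijection because $z_i \ne 0$ there; under it the tangent constraint $\sum_{i \in S} z_i d_i = 0$ becomes $\sum_{i\in S} e_i = 0$, and the curvature form $4\,d_S^\top \diag(z_S)\,\nabla^2 f(w)\,\diag(z_S)\, d_S$ becomes $e_S^\top \nabla^2 f(w)|_{SS}\, e_S$. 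This is precisely the simplex second-order condition on its critical cone, so the two quadratic-form conditions coincide.

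The main obstacle I anticipate is the careful bookkeeping at the boundary, where $\phi$ degenerates and $d\phi$ drops rank exactly at $z_i = 0$. This degeneracy is not a nuisance but the heart of the matter: it is what transmutes a first-order simplex inequality into a second-order sphere inequality, and obtaining a clean correspondence requires assuming strict complementarity, so that the simplex critical cone reduces to the support subspace on which the bijection $e = 2\,z \odot d$ is defined. I would therefore state strict complementarity as the working hypothesis, verify this bijection between tangent and critical directions, and treat weakly-active constraints ($w_i = 0$ with $\nu_i = 0$) as a separate remark, since there the critical cone acquires genuine inequality faces that must be matched against the corresponding sign-constrained sphere directions.
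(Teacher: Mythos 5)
Your proof is correct, and while it shares its skeleton with the paper's argument --- the chain-rule identity $\nabla F(z) = 2z \odot \nabla f(w)$, the multiplier identification $\eta = \mu$, and the change of variables $\delta w = 2\diag(z)\,d$ on the support --- it is genuinely more complete, and it repairs two real gaps in the paper's own proof. First, the paper's first-order step disposes of the inactive coordinates by declaring that when $w_i^* = 0$ ``the sphere problem imposes no additional condition,'' and it never recovers the simplex dual-feasibility requirement $\mu_i^* \ge 0$; as you point out, first-order conditions alone are \emph{not} equivalent (for instance, every vertex $w^* = e_j$ of the simplex lifts to a sphere stationary point $z^* = e_j$ with $\gamma^* = -\partial_j f(e_j)$, whether or not dual feasibility holds there). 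Second, the paper's second-order step asserts the identity $d^\top\bigl(\nabla^2 g(z^*) + 2\gamma^* I\bigr)d = \delta w^\top \nabla^2 f(w^*)\,\delta w$, which silently discards the diagonal term $2\diag\bigl(\nabla f(w^*) + \gamma^*\mathbf{1}\bigr)$ of the Lagrangian Hessian; that term vanishes only on the support of $z^*$, so the asserted identity fails for directions with off-support components. Your treatment keeps exactly that term, tests it along off-support coordinate directions (which are automatically tangent since $z_i^* = 0$), and thereby recovers $\partial_i f(w^*) \ge -\eta$, i.e.\ the missing dual feasibility; this mechanism --- first-order simplex information reappearing as second-order sphere curvature --- is the heart of the theorem and is entirely absent from the paper. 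You are also right that exact matching of the second-order conditions needs strict complementarity: with a weakly active constraint ($w_i^* = 0$, $\mu_i^* = 0$) the simplex critical cone acquires an inequality face $\delta w_i \ge 0$ that the support subspace on the sphere side cannot reproduce, a hypothesis the paper never states. In short, both arguments obtain the support-side correspondence $e = 2z \odot d$ in the same way; your off-support analysis is what makes the claimed equivalence actually true (under strict complementarity), whereas the paper's proof, read literally, establishes only that support-restricted directions match.
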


\begin{proof}
\textbf{First-Order KKT Conditions}

    The first-order KKT conditions of the simplex problem are:
    \begin{small}
	\begin{align}
		&\nabla f(w^*) + \lambda^* \mathbf{1} - \mu^* = 0, \\
		&\sum_{i=1}^m w_i^* = 1,\ w_i^* \ge 0, \quad \mu_i^* \ge 0,\ \mu_i^* w_i^* = 0.
	\end{align}
    \end{small}
    
    The first-order KKT conditions of the sphere problem are:
	\begin{equation}
		2 \operatorname{diag}(z^*) \nabla f(w^*) + 2\gamma^* z^* = 0,\quad \sum_{i=1}^m (z_i^*)^2 = 1.
	\end{equation}
	
	Setting \( \gamma^* = \lambda^* \) and \( w^* = z^* \odot z^* \), the conditions correspond exactly:
	
	\begin{enumerate}
		\item When \( w_i^* > 0 \): \( \nabla f(w^*)_i + \lambda^* = 0 \)
		\item When \( w_i^* = 0 \): The sphere problem imposes no additional condition
        \vspace{-\baselineskip}
	\end{enumerate}
\end{proof}

\begin{proof}
	\textbf{Second-Order KKT Conditions}
	
	The second-order term for the sphere problem:
	\begin{equation}
		d^\top \left( \nabla^2 g(z^*) + 2\gamma^* I \right) d
	\end{equation}
	equals the second-order term for the simplex problem:
	\begin{equation}
		\delta w^\top \nabla^2 f(w^*) \delta w
	\end{equation}
	where \( \delta w = 2 \operatorname{diag}(z^*) d \) is a feasible direction.This establishes the equivalence of the second-order conditions.
\end{proof}

Subsequently, we apply Riemannian gradient descent\cite{smith_optimization_2014,yuan2025riemannian} on the unit sphere. This algorithm operates with $O(n)$ time complexity, making it computationally more efficient. The Riemannian gradient is obtained by projecting the Euclidean gradient onto the tangent space:
\begin{small}
\begin{equation}
proj,h(z) = P_z(\nabla_z h(z)) = \nabla_z h(z) - z(z^\top \nabla_z h(z))
\end{equation}
\end{small}
where $P_z = I - zz^\top$ projects from $\mathbb{R}^m$ to $T_z\mathbb{S}^{m-1}$, and $\nabla_z h(z) = 2z \odot \nabla f(z \odot z)$ by the chain rule.

Retraction maps the updated point back to the sphere while maintaining optimality conditions\cite{absil_projection-like_2012}.

The general steps of the Riemannian gradient descent on the unit sphere are as follows:

\begin{algorithm}[H]
	\caption{Riemannian Gradient Descent on the Unit Sphere}
	\label{alg:riemannian-gd}
	\begin{algorithmic}[1]
		\REQUIRE Initialize $z^{(0)} \in \mathbb{S}^{m-1}$, learning rate sequence $\{\alpha_t\}$
		\ENSURE Optimized point $z^{(T)}$
		\STATE Initialize $z_0$ randomly on the unit sphere
		\FOR{$t = 0, 1, 2, \ldots$ until convergence}
		\STATE Compute $w^{(t)} = z^{(t)} \odot z^{(t)}$
		\STATE Compute Euclidean gradient: $g_w = \nabla f(w^{(t)})$
		\STATE Compute reparameterized gradient: $g_z = 2z^{(t)} \odot g_w$
		\STATE Project onto tangent space: $\xi = g_z - z^{(t)} (z^{(t)^\top} g_z)$
		\STATE Update in tangent space: $y = z^{(t)} - \alpha_t \xi$
		\STATE Perform retraction: $z^{(t+1)} = y / \|y\|_2$
		\ENDFOR
	\end{algorithmic}
\end{algorithm}

Finally, we demonstrate that the proposed loss function exhibits desirable mathematical properties. The function is \textit{Lipschitz continuous}~\cite{pmlr-v70-malherbe17a,vinod_constrained_2022}, which guarantees numerical stability and facilitates convergence via gradient-based methods. Moreover, its \textit{convexity} with respect to the margin~\cite{5457713} ensures that in the margin space, there exists only a single global minimum without any local minima. This creates a favorable optimization landscape: as long as the optimization algorithm can effectively manipulate the margin values through changes in the parameters, it will not be trapped in spurious local optima. This property, combined with its Lipschitz continuity, makes the loss function well-suited for gradient-based optimization. Formal proofs of these properties are provided below.


\begin{theorem}[Lipschitz Continuity of Loss Function]
Assuming there exist $M_g > 0$ such that $|g_k(x_i)|_2 \leq M_g$ for all $i,k$ and $M_m > 0$ such that $|m_i| \leq M_m$ for all $i$, then the loss function $\mathcal{L}(W)$ is Lipschitz continuous: \begin{equation}
    L = M_g (1 + \sqrt{c}) (1 + 4\lambda M_m).
    \end{equation}
    \end{theorem}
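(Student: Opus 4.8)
The plan is to establish Lipschitz continuity by bounding the Euclidean norm of the gradient $\nabla_W \mathcal{L}(W)$, since a $C^1$ function with $\|\nabla \mathcal{L}(W)\| \le L$ everywhere is $L$-Lipschitz by the mean value inequality. So the entire argument reduces to a gradient bound, and the structure of $\mathcal{L}$ as a sum of a mean term and a variance term means I can bound each piece separately and combine them. First I would compute $\nabla_W m_i$ for the smoothed margin. Writing $m_i = (G_i W)^\top Y_i - \tfrac{1}{\alpha}\log\sum_j \exp(\alpha u_j)$ where $u = G_i W - Y_i \odot (G_i W)$, the chain rule gives $\nabla_W m_i = G_i^\top Y_i - G_i^\top(I - \operatorname{diag}(Y_i)) s_i$, where $s_i$ is the softmax vector of $\alpha u$. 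The key observation is that $s_i$ lies in the probability simplex, so $\|s_i\|_1 = 1$ and $\|s_i\|_2 \le 1$; combined with the assumption $\|g_k(x_i)\|_2 \le M_g$, which controls the operator norm of $G_i$, this should yield a bound of the form $\|\nabla_W m_i\| \le M_g(1+\sqrt{c})$. The factor $\sqrt{c}$ is expected to enter precisely here, from relating the $\ell_2$ norm of the softmax-weighted column combination to $M_g$ across the $c$ coordinates.

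Next I would handle the two terms of $\mathcal{L}$. The mean term $-\tfrac{1}{n}\sum_i m_i$ has gradient $-\tfrac{1}{n}\sum_i \nabla_W m_i$, whose norm is at most $\max_i \|\nabla_W m_i\| \le M_g(1+\sqrt{c})$ by the triangle inequality and averaging. For the variance term $\tfrac{1}{n}\sum_i m_i^2 - (\tfrac{1}{n}\sum_i m_i)^2$, differentiation gives $\tfrac{2}{n}\sum_i (m_i - \bar m)\nabla_W m_i$ where $\bar m = \tfrac{1}{n}\sum_i m_i$; here I invoke the assumption $|m_i| \le M_m$, which forces $|m_i - \bar m| \le 2M_m$, so this gradient has norm at most $4\lambda M_m \cdot M_g(1+\sqrt{c})$. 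Adding the two contributions gives $\|\nabla_W \mathcal{L}(W)\| \le M_g(1+\sqrt{c})(1 + 4\lambda M_m)$, matching the claimed constant $L$ exactly.

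The main obstacle I anticipate is the sharp derivation of the $(1+\sqrt{c})$ factor in the per-sample margin gradient bound, since it requires carefully tracking how the operator norm of $G_i$ interacts with both the $Y_i$ term and the softmax-weighted term, and being honest about whether $\|s_i\|_2 \le 1$ versus $\|s_i\|_1 = 1$ is the right inequality to deploy at each step; a loose bound here would produce a different constant. A secondary technical point is that the claimed Lipschitz property is presumably over the relevant constraint set (the simplex, or its spherical reparameterization), so I would note that since the gradient bound holds pointwise on all of $\mathbb{R}^m$, it in particular holds on any convex feasible set, and the mean-value inequality applies directly on the convex simplex; one should remark that the bound on $|m_i|$ is itself an assumption rather than something derived, which is what makes the variance-term estimate clean.
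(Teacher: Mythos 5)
Your proposal takes essentially the same route as the paper's own proof: bound $\|\nabla_W\mathcal{L}\|_2$ by splitting $\mathcal{L}$ into its mean and variance parts, push the chain rule through the log-sum-exp so that a softmax (probability) vector appears, control the variance contribution via $|m_i-\bar m|\le 2M_m$, and conclude with the mean value inequality; the decomposition $\nabla_W\mathcal{L}=-\tfrac1n\sum_i\nabla_W m_i+\tfrac{2\lambda}{n}\sum_i(m_i-\bar m)\nabla_W m_i$ and the resulting factor $(1+4\lambda M_m)$ are identical to the paper's.

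Where you differ is in the per-sample constant, and the difference is instructive. The paper's proof bounds $\|\nabla_{\mathbf{s}_i}m_i\|_2\le 1+2=3$ and hence actually proves $L=3M_g(1+4\lambda M_m)$, which does \emph{not} match the constant $M_g(1+\sqrt c)(1+4\lambda M_m)$ announced in the theorem --- the paper's statement and its proof are internally inconsistent on this point. You instead try to reproduce the stated $(1+\sqrt c)$ factor and name its ``sharp derivation'' as your main obstacle; but that chase is unnecessary. The observation you already make --- that the softmax vector $s_i$ lies in the probability simplex, so $\|s_i\|_2\le\|s_i\|_1=1$ --- immediately gives $\|\nabla_{\mathbf{s}_i}m_i\|_2\le\|Y_i\|_2+\|(I-\operatorname{diag}(Y_i))s_i\|_2\le 2$, and since $2\le 1+\sqrt c$ for every $c\ge 1$, the resulting bound $2M_g(1+4\lambda M_m)$ is \emph{stronger} than the stated $L$ and therefore validates the theorem (the stated constant is simply not tight, and no $\sqrt c$ ever needs to appear). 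Finally, note a gap you share with the paper: the hypothesis bounds the columns, $\|g_k(x_i)\|_2\le M_g$, but both your argument and the paper's then use $M_g$ as if it bounded the spectral norm of $G_i$ (you assert the column bound ``controls the operator norm''). From column bounds alone one only gets $\|G_i\|_2\le\|G_i\|_F\le\sqrt m\,M_g$, so either a factor $\sqrt m$ must enter the constant or the hypothesis must be strengthened to a bound on $\|G_i\|_2$ for either proof to be airtight.
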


\begin{proof}
	We prove the Lipschitz continuity by bounding the gradient norm.
	
	Let $\mathbf{s}_i = G_i W$. The smoothed margin is:
	\begin{equation}
		m_i = \mathbf{s}_i^\top Y_i - \frac{1}{\alpha}logsumexp(\alpha(\mathbf{s}_i - Y_i \odot \mathbf{s}_i)_j)
	\end{equation}
	
	Define $p_j = \frac{\exp(\alpha(\mathbf{s}_i - Y_i \odot \mathbf{s}_i)_j)}{\sum_{k=1}^c \exp(\alpha(\mathbf{s}_i - Y_i \odot \mathbf{s}_i)_k)}$, then since we have:
	
	\begin{equation}
		\|\nabla_{\mathbf{s}_i}(\mathbf{s}_i - Y_i \odot \mathbf{s}_i)_j\|_2 \leq 1 + \|Y_i\|_2 = 2
	\end{equation}
	\vspace{-\baselineskip} 
	\begin{small}
		\begin{equation}
			\|\nabla_{\mathbf{s}_i} m_i\|_2 = \|Y_i - \sum_{j=1}^c p_j \nabla_{\mathbf{s}_i}(\mathbf{s}_i - Y_i \odot \mathbf{s}_i)_j\|_2
			\leq 1 + 2 = 3
		\end{equation}
	\end{small}

	By the chain rule:
	\begin{small}
	\begin{equation}
		\|\nabla_W m_i\|_2 =\|G_i^\top \nabla_{\mathbf{s}_i} m_i\|_2 \leq \|G_i^\top\|_2 \cdot \|\nabla_{\mathbf{s}_i} m_i\|_2 \leq 3 M_g
	\end{equation}
	\end{small}
	For the variance term:
	\begin{equation}
		\nabla_W \mathrm{Var}[m] = \frac{2}{n}\sum_{i=1}^n (m_i - \bar{m})\nabla_W m_i
	\end{equation}
    
	\vspace{-\baselineskip}
    
	\begin{equation}
		\|\nabla_W \mathrm{Var}[m]\|_2 \leq 4M_m \cdot 3M_g = 12M_m M_g
	\end{equation}
	
	Combining both components:
	\begin{small}
	\begin{equation}
		\|\nabla_W \mathcal{L}(W)\|_2 \leq 3M_g + 12\lambda M_m M_g = 3M_g(1 + 4\lambda M_m)
	\end{equation}
	\end{small}
	
	By the mean value theorem, $\mathcal{L}(W)$ is Lipschitz continuous with constant $L = 3M_g(1 + 4\lambda M_m)$.
\end{proof}

\begin{theorem}[Convexity with Respect to Margin]
The loss function $\mathcal{L}$ is convex with respect to the margin vector $\mathbf{m} = (m_1, \dots, m_n)^\top$, where each $m_i$ represents the margin of the $i$-th instance.
\end{theorem}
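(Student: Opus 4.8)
The plan is to verify convexity directly by showing that the Hessian of $\mathcal{L}$, viewed as a function of the margin vector $\mathbf{m} \in \mathbb{R}^n$, is positive semidefinite everywhere. First I would decompose the objective into its two constituent pieces: the mean term $-\frac{1}{n}\sum_i m_i$ and the variance term $\lambda\,\mathrm{Var}[m]$. The mean term is affine (indeed linear) in $\mathbf{m}$, so it contributes nothing to the Hessian and is trivially convex. Hence the whole question reduces to establishing convexity of the sample variance $V(\mathbf{m}) = \frac{1}{n}\sum_i m_i^2 - \bar{m}^2$, scaled by the nonnegative constant $\lambda$.

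Next I would compute the Hessian of $V$ explicitly. Differentiating twice yields $\partial^2 V / \partial m_j \partial m_k = \frac{2}{n}\left(\delta_{jk} - \frac{1}{n}\right)$, so that $\nabla^2_{\mathbf{m}} V = \frac{2}{n}\left(I - \frac{1}{n}\mathbf{1}\mathbf{1}^\top\right)$, where $\mathbf{1}$ denotes the all-ones vector. The matrix $H = I - \frac{1}{n}\mathbf{1}\mathbf{1}^\top$ is the familiar centering matrix. To show it is positive semidefinite, for an arbitrary direction $d \in \mathbb{R}^n$ I would compute $d^\top H d = \|d\|_2^2 - \frac{1}{n}(\mathbf{1}^\top d)^2$, which is nonnegative by the Cauchy--Schwarz inequality $(\mathbf{1}^\top d)^2 \leq \|\mathbf{1}\|_2^2\,\|d\|_2^2 = n\|d\|_2^2$. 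Therefore $\nabla^2_{\mathbf{m}}\mathcal{L} = \frac{2\lambda}{n}H \succeq 0$ since $\lambda \geq 0$, and convexity follows immediately.

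An equivalent and perhaps cleaner route I might present instead rewrites the variance as a sum of squares, $V(\mathbf{m}) = \frac{1}{n}\sum_i (m_i - \bar{m})^2$. Each residual $m_i - \bar{m}$ is an affine function of $\mathbf{m}$, its square is convex, a nonnegatively weighted sum of convex functions is convex, and adding the affine mean term preserves convexity. I expect essentially no obstacle here, since $\mathcal{L}$ is quadratic in $\mathbf{m}$ and the entire claim hinges on the single elementary fact that the centering matrix is positive semidefinite. The only point demanding care is to emphasize that convexity is asserted \emph{with respect to the margin vector} $\mathbf{m}$, and not with respect to the weights $W$: the map $W \mapsto \mathbf{m}(W)$ is nonlinear, so it does not transport this convexity back to $W$-space, a distinction the theorem statement deliberately makes.
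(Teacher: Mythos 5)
Your proposal is correct and follows essentially the same route as the paper's own proof: split $\mathcal{L}$ into the affine mean term and the variance term, compute the Hessian of the variance as $\frac{2}{n}\left(I - \frac{1}{n}\mathbf{1}\mathbf{1}^\top\right)$, and establish positive semidefiniteness via the Cauchy--Schwarz inequality. Your additional observations---the sum-of-squares-of-affine-functions alternative and the caveat that convexity in $\mathbf{m}$ does not transfer to $W$ because $W \mapsto \mathbf{m}(W)$ is nonlinear---are sound and, if anything, sharpen the presentation beyond what the paper provides.
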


\begin{proof}
	The loss function consists of two parts: a linear term $-\frac{1}{n}\sum_{i=1}^n m_i$ and a variance term $\lambda \cdot \mathrm{Var}(\mathbf{m})$. The linear term is an affine function, which is both convex and concave.
	
	For the variance term $\mathrm{Var}(\mathbf{m}) = \frac{1}{n}\sum_{i=1}^n (m_i - \bar{m})^2$, where $\bar{m} = \frac{1}{n}\sum_{i=1}^n m_i$, we compute its Hessian matrix. The second-order partial derivatives are:
    \begin{small}
        \begin{equation}
		\frac{\partial^2 \mathrm{Var}(\mathbf{m})}{\partial m_k \partial m_l} = \frac{2}{n}\left(\delta_{kl} - \frac{1}{n}\right)
        \end{equation}  
    \end{small}

	Thus, the Hessian matrix $\mathbf{H}$ can be expressed as:
    \begin{small}
    \begin{equation}
		\mathbf{H} = \frac{2}{n}\left(\mathbf{I} - \frac{1}{n}\mathbf{1}\mathbf{1}^T\right)
    \end{equation}    
    \end{small}

	where $\mathbf{I}$ is the $n \times n$ identity matrix and $\mathbf{1}$ is the all-ones vector. For any non-zero vector $\mathbf{v} \in \mathbb{R}^n$:
    \begin{small}
   	\begin{equation}
		\mathbf{v}^T \mathbf{H} \mathbf{v} = \frac{2}{n} \left(\|\mathbf{v}\|^2 - \frac{1}{n}(\mathbf{1}^T \mathbf{v})^2\right) \geq 0
	\end{equation}     
    \end{small}
	by the Cauchy-Schwarz inequality. Therefore, the Hessian matrix is positive semidefinite, and the variance term is convex.So the overall loss function is convex.
\end{proof}

    \section{EXPERIMENT}
\label{sec:experiment}

\begin{figure}[H]
	\centering
        \setlength{\tabcolsep}{2pt} 
	\begin{tabular}{ccc}
		\subfigure[Data distribution]{
			\includegraphics[width=0.13\textwidth]{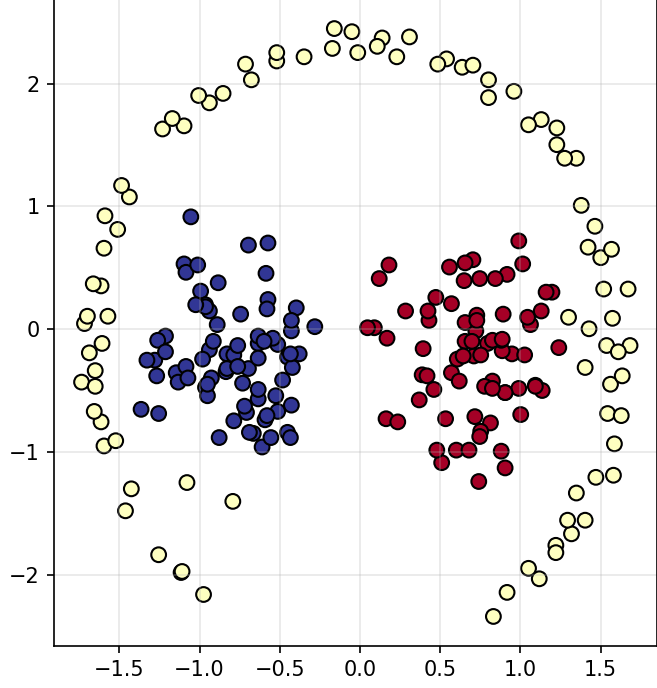} 
		} &
		\subfigure[Our method]{
			\includegraphics[width=0.13\textwidth]{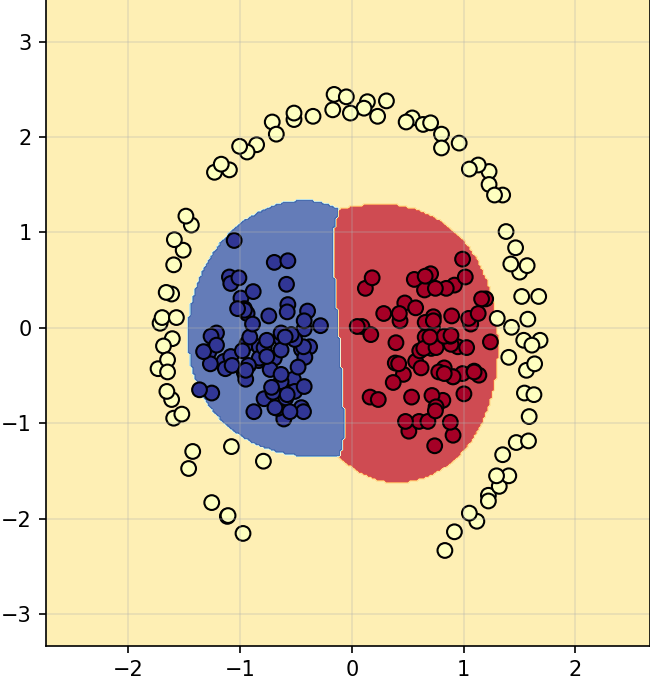} 
		} &
		\subfigure[RF100]{
			\includegraphics[width=0.13\textwidth]{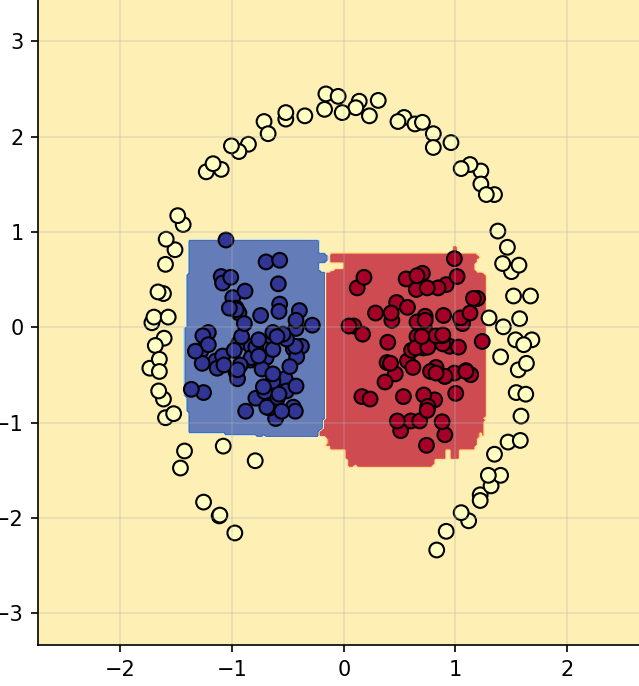} 
		} \\
		\subfigure[SVM]{
			\includegraphics[width=0.13\textwidth]{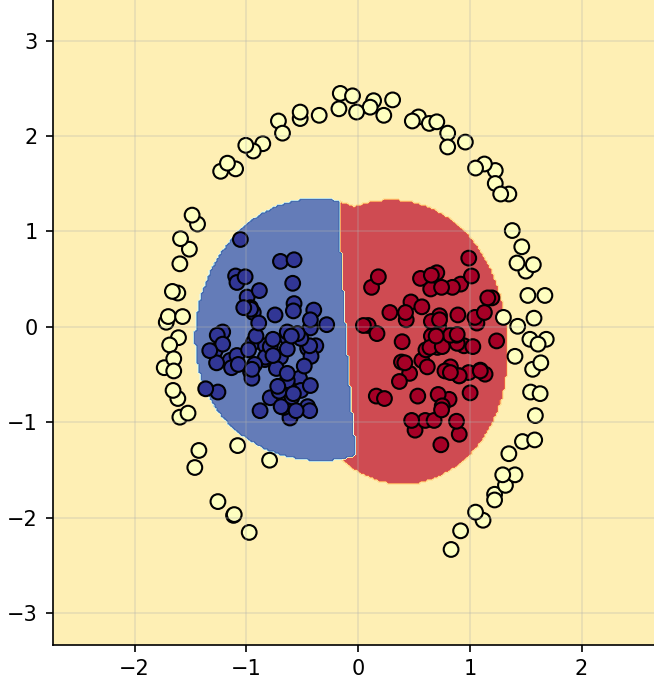} 
		} &
		\subfigure[XGBoost]{
			\includegraphics[width=0.13\textwidth]{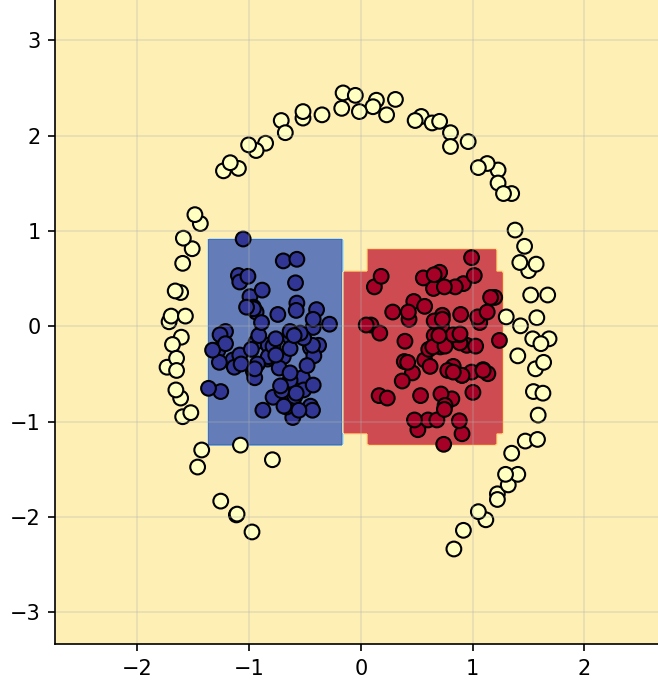}
		} &
		\subfigure[LightGBM]{
			\includegraphics[width=0.13\textwidth]{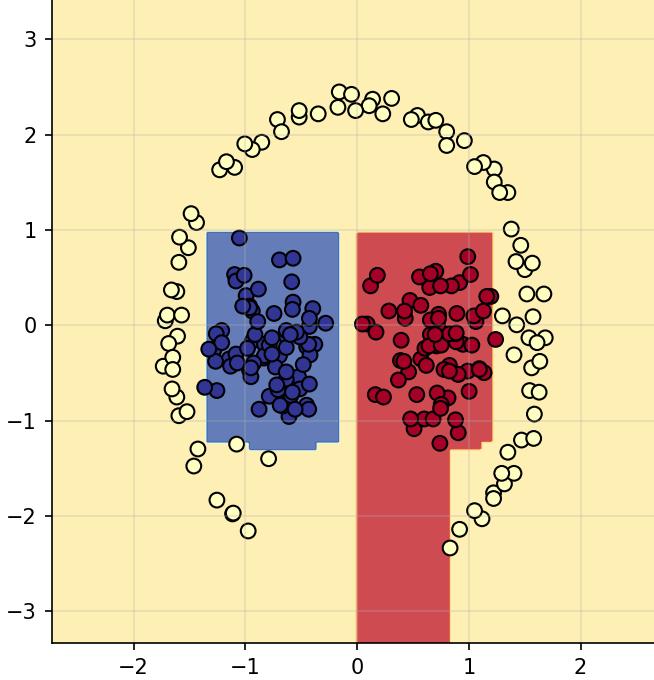} 
		}
	\end{tabular}
	\caption{Decision boundaries on Pathbased dataset}
	\label{F6}
\end{figure}

\subsection{Experimental Setup}
We evaluate our method on eight diverse datasets\cite{1190577, wang_generalized_2023, 10.1007/978-3-030-98015-3_17, KHAN201894} with varying sizes and dimensions. All models use a maximum depth of 7 for fair comparison. An 80:20 train-test split is applied, and the random seed is fixed as 0.

\begin{table*}
	\caption{Accuracy comparison on various datasets}
	\label{table:results}
	\centering
	\small
	\setlength{\tabcolsep}{4pt}
	\begin{tabular}{||c||c||ccccccccc||cc||}
		\hline
		Dataset & Split & Our-Best$\lambda$ & Our-$\lambda$=0 & RF50 & RF100 & SVM & XGBoost & LightGBM & Samples & Features \\
		\hline
		Basehock & Train & \underline{98.68} ($\lambda$=500) & 98.87 & 95.80 & 96.49 & \textbf{99.62} & 93.41 & 91.28 & 1594 & \multirow{2}{*}{4862} \\
		& Test  & \textbf{95.99} & 93.48 & 93.73 & \underline{95.74} & 94.49 & 90.48 & 88.97 & 399 &  \\
		\hline
		Breast\_uni & Train & 97.50 ($\lambda$=0.1) & 97.32 & 98.57 & \underline{98.57} & 97.50 & \textbf{99.11} & 97.14 & 559 & \multirow{2}{*}{10} \\
		& Test  & \underline{96.43} & 96.43 & \textbf{97.14} & 97.14 & 96.43 & 95.71 & 95.00 & 140 &  \\
		\hline
		Chess & Train & \textbf{98.75} ($\lambda$=40) & 98.63 & 94.25 & 94.13 & 98.47 & \underline{97.26} & 94.48 & 2557 & \multirow{2}{*}{36} \\
		& Test  & \textbf{98.28} & 97.81 & 94.69 & 94.53 & \underline{97.97} & 96.88 & 94.84 & 639 &  \\
		\hline
		Mnist-10000 & Train & \textbf{97.95} ($\lambda$=20) & 97.94 & 86.61 & 87.03 & \underline{98.38} & 96.93 & 95.17 & 10000 & \multirow{2}{*}{10} \\
		& Test  & \textbf{94.82} & 94.74 & 84.91 & 85.15 & \underline{93.78} & 91.26 & 90.50 & 10000 &  \\
		\hline
		Jaffe & Train & \textbf{100} ($\lambda$=0.01) & 100 & 100 & 100 & 100 & 100 & 100 & 170 & \multirow{2}{*}{1024} \\
		& Test  & \textbf{100} & 100 & \underline{97.67} & 97.67 & 100 & 100 & 97.67 & 43 &  \\
		\hline
		Pathbased & Train & \textbf{100} ($\lambda$=1.0) & 99.58 & 100 & 98.33 & \underline{99.58} & 100 & 99.17 & 240 & \multirow{2}{*}{2} \\
		& Test  & \textbf{100}  & 100 & 96.67 & 96.67 & \underline{98.33} & 98.33 & 100 & 60 &  \\
		\hline
		Relathe & Train & \underline{96.76} ($\lambda$=50) & 97.11 & 83.17 & 85.63 & \textbf{98.60} & 88.96 & 85.01 & 1142 & \multirow{2}{*}{4322} \\
		& Test  & \textbf{89.16} & 88.46 & 78.32 & 79.72 & \underline{88.81} & 80.07 & 79.02 & 285 &  \\
		\hline
		Wine & Train & \textbf{100} ($\lambda$=0.1) & 100 & 100 & 100 & \underline{99.30} & 100 & 100 & 142 & \multirow{2}{*}{13} \\
		& Test  & \textbf{100} & 100 & 100 & 100 & \underline{97.22} & 97.22 & 100 & 36 &  \\
		\hline
	\end{tabular}
\end{table*}

\subsection{Results and Analysis}
As shown in Table~\ref{table:results}, our ensemble achieves state-of-the-art performance, obtaining the highest test accuracy on most databases. The method excels on high-dimensional datasets like BASEHOCK (4,862 features) and RELATHE (4,322 features), demonstrating strong capability in complex feature spaces.The model shows excellent generalization with minimal overfitting. For example, on Chess, it maintains only a 0.47\% accuracy drop from training to test, outperforming methods like SVM which shows a 5.13\% drop on BASEHOCK.

\begin{table}[htbp]
	\centering
	\caption{Runtime(seconds) comparison of Hadamard Riemannian gradient method (Method 1) and direct projected gradient method (Method 2)}
	\label{table:runtime}
	\small
	\setlength{\tabcolsep}{3pt}
	\begin{tabular}{||c||cc|c||cc||}
		\hline
		Dataset & Method1 (s) & Method2 (s) & Speedup (\%) \\
		\hline
		BASEHOCK & 0.2458 & 0.3762 & 34.7  \\
		Breast\_uni & 0.1769 & 0.2241 & 21.1  \\
		Chess & 0.3843 & 0.5689 & 32.5  \\
		MNIST-10000 & 0.7541 & 1.8363 & 58.9  \\
		Jaffe & 0.1069 & 0.1347 & 20.6  \\
		Pathbased & 0.1555 & 0.1747 & 11.0  \\
		RELATHE & 0.2472 & 0.5587 & 55.8  \\
		Wine & 0.1493 & 0.1648 & 9.4  \\
		\hline
	\end{tabular}
\end{table}

Table~\ref{table:runtime} shows that our Hadamard-Riemannian gradient method achieves faster runtime across all datasets, with significant improvements on high-dimensional datasets (59\% on MNIST and 56\% on RELATHE), demonstrating practical advantages for large-scale problems.

\begin{figure}[h]
	\centering
        \setlength{\tabcolsep}{2pt} 
        \begin{tabular}{c@{\hspace{0.3cm}}c@{\hspace{0.3cm}}c}
		\subfigure[BASEHOCK]{
			\includegraphics[width=0.1325\textwidth]{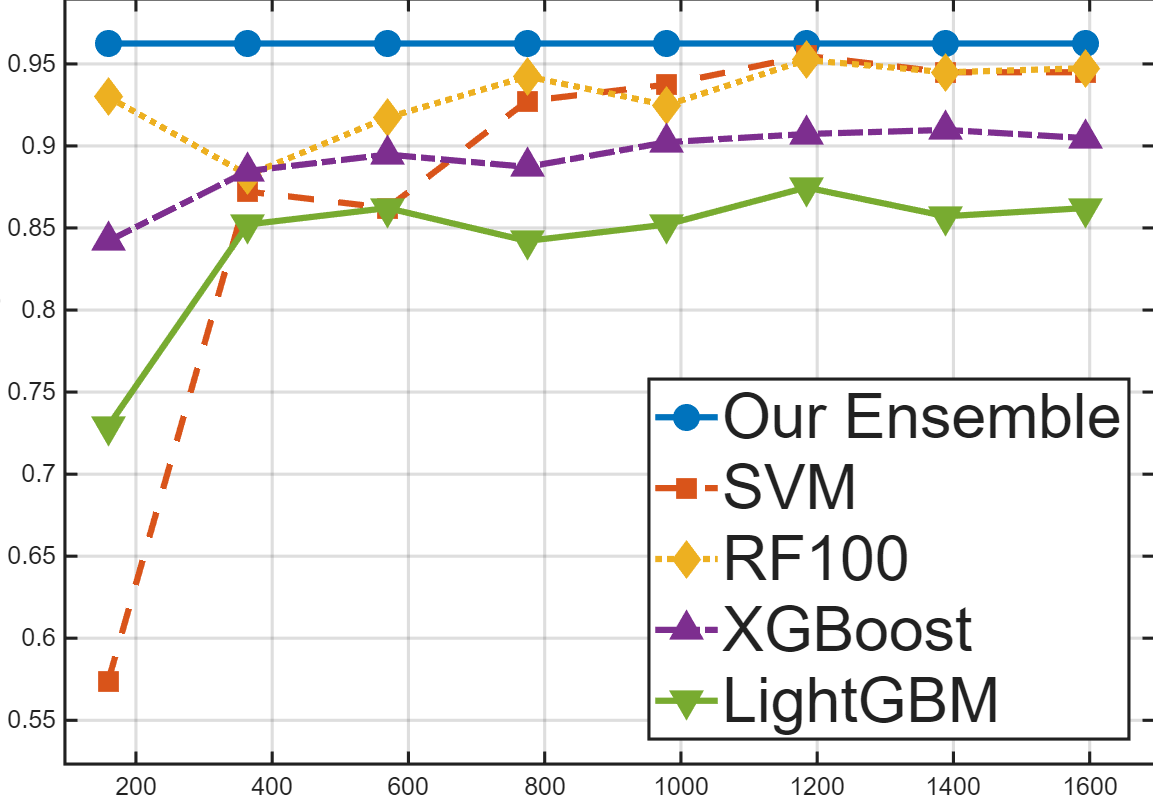}
		} &
		\subfigure[Chess]{
			\includegraphics[width=0.1325\textwidth]{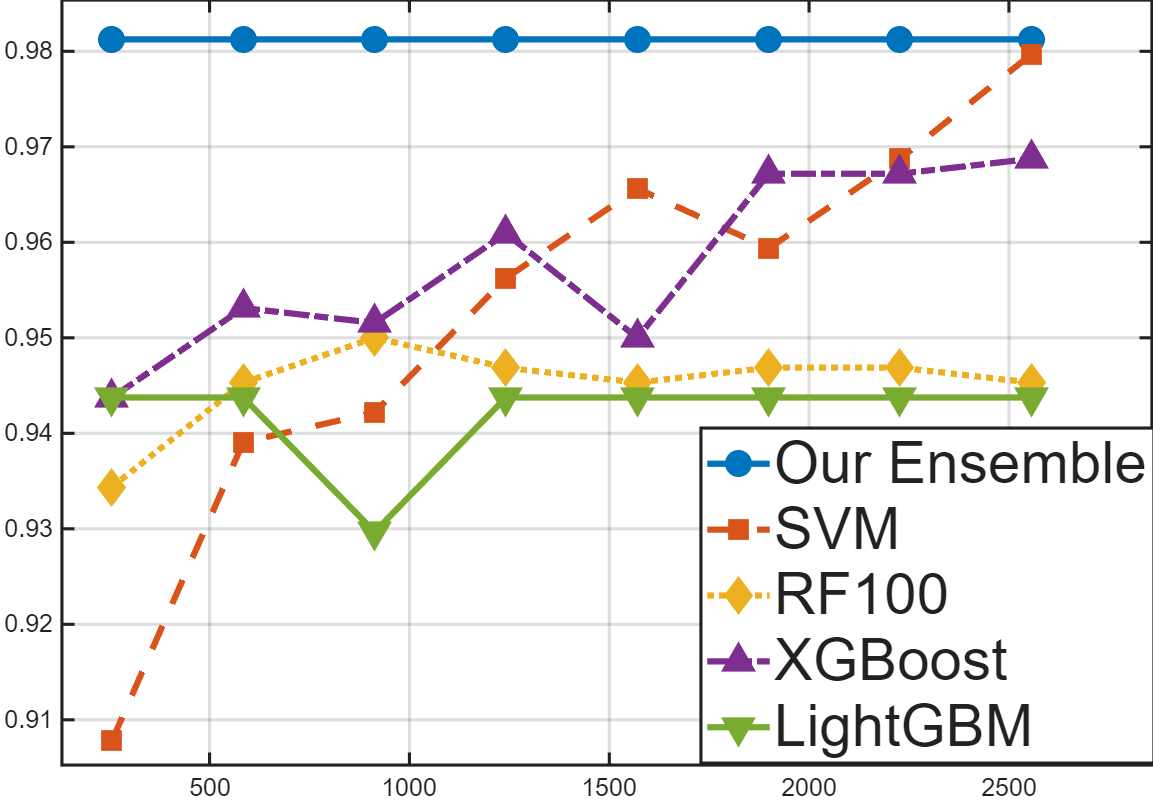}
		} &
		\subfigure[Mnist-10000]{
			\includegraphics[width=0.1325\textwidth]{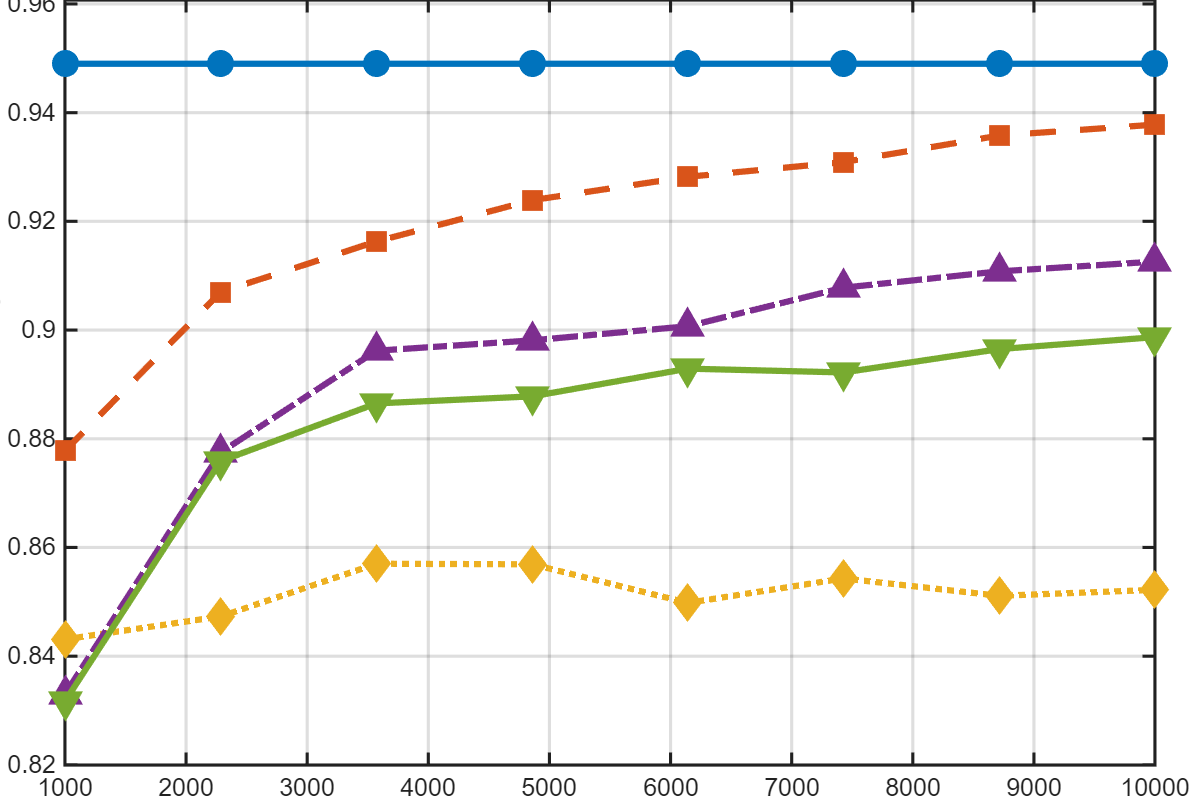}
		} \\
		\subfigure[Pathbased]{
			\includegraphics[width=0.1325\textwidth]{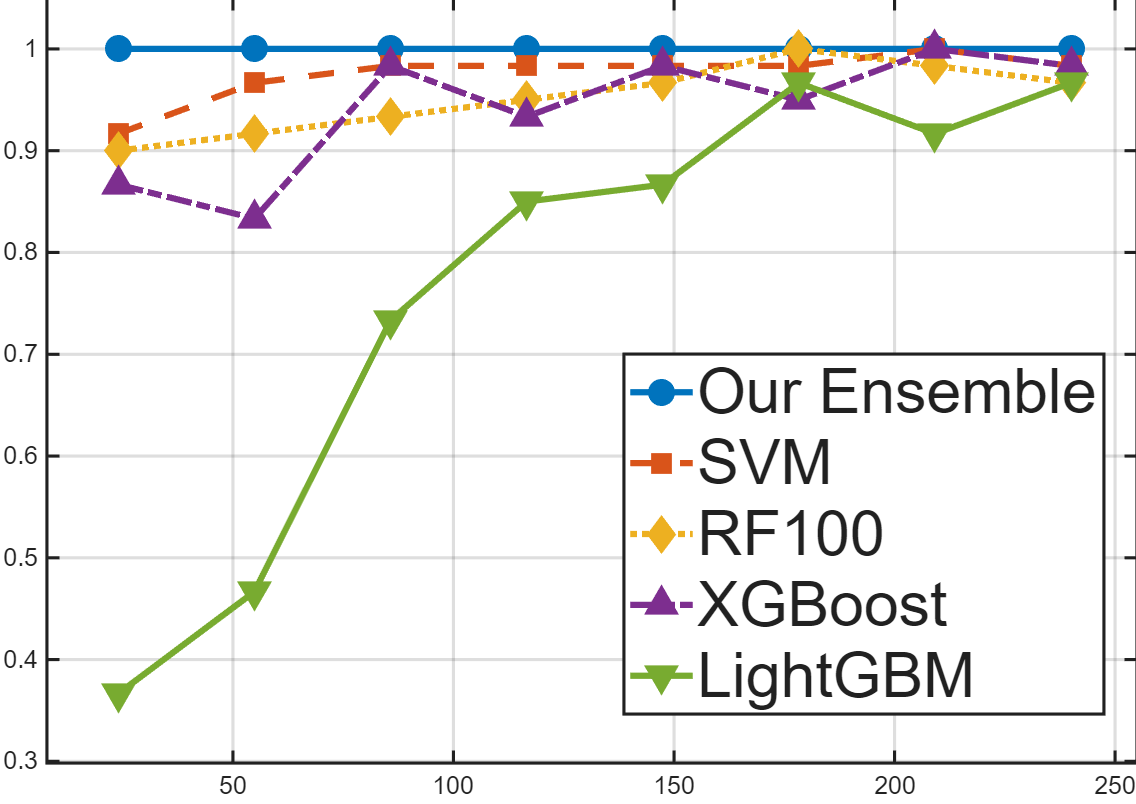}
		} &
		\subfigure[RELATHE]{
			\includegraphics[width=0.1325\textwidth]{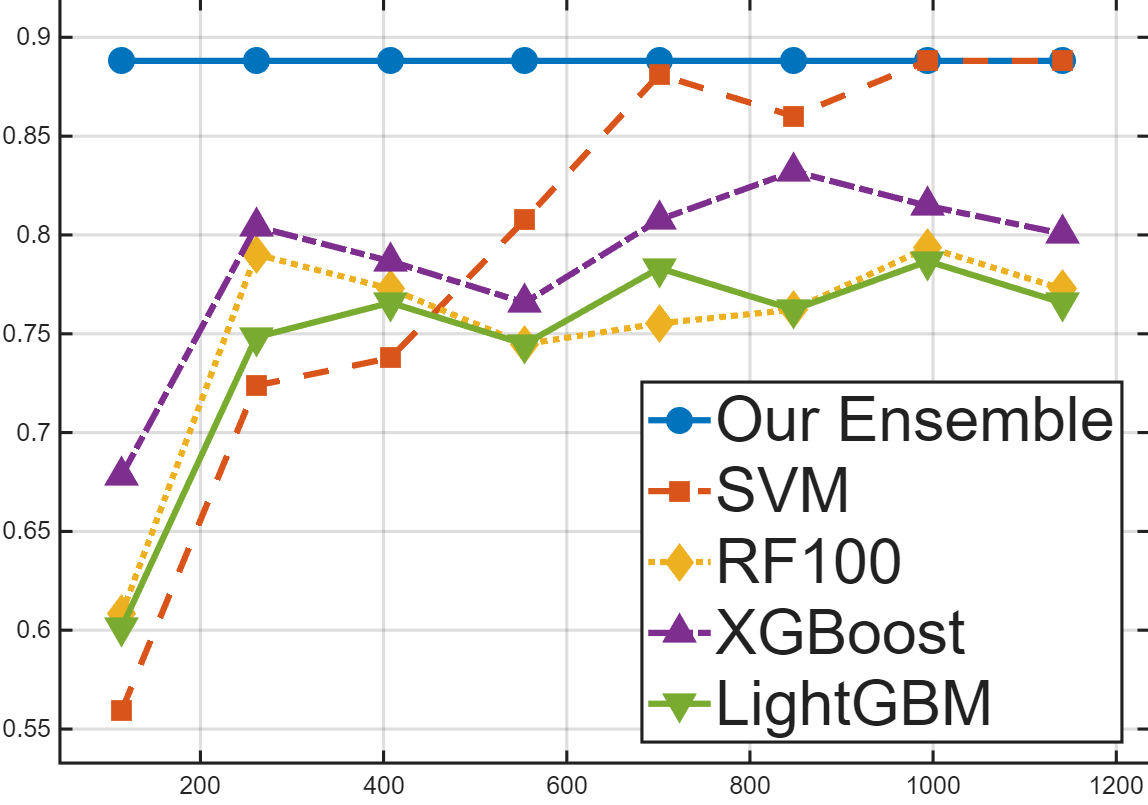}
		} &
		\subfigure[Wine]{
			\includegraphics[width=0.1325\textwidth]{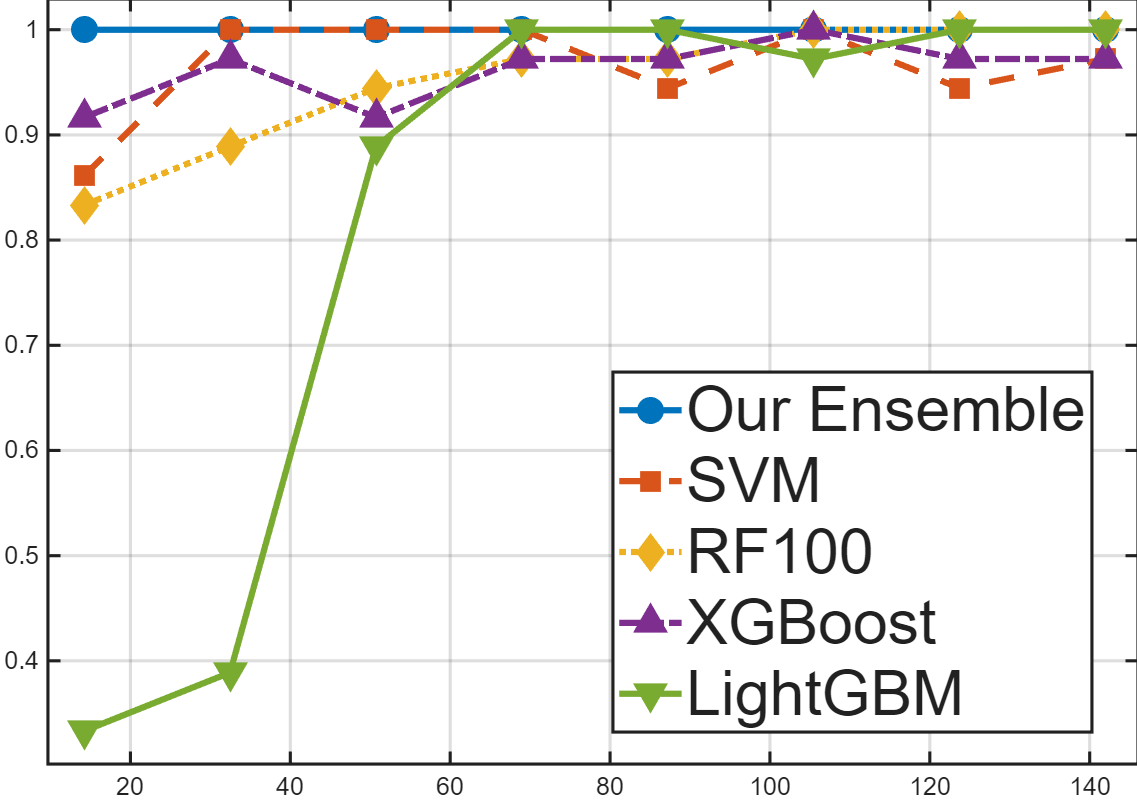}
		}
	\end{tabular}
	\caption{Accuracy vs. Training data size}
	\label{F6}
\end{figure}

\begin{small}
\begin{figure}[h]
	\centering
        \setlength{\tabcolsep}{2pt} 
	\begin{tabular}{cc}
		\subfigure[Test accuracy vs $\lambda$]{
			\includegraphics[width=0.22\textwidth]{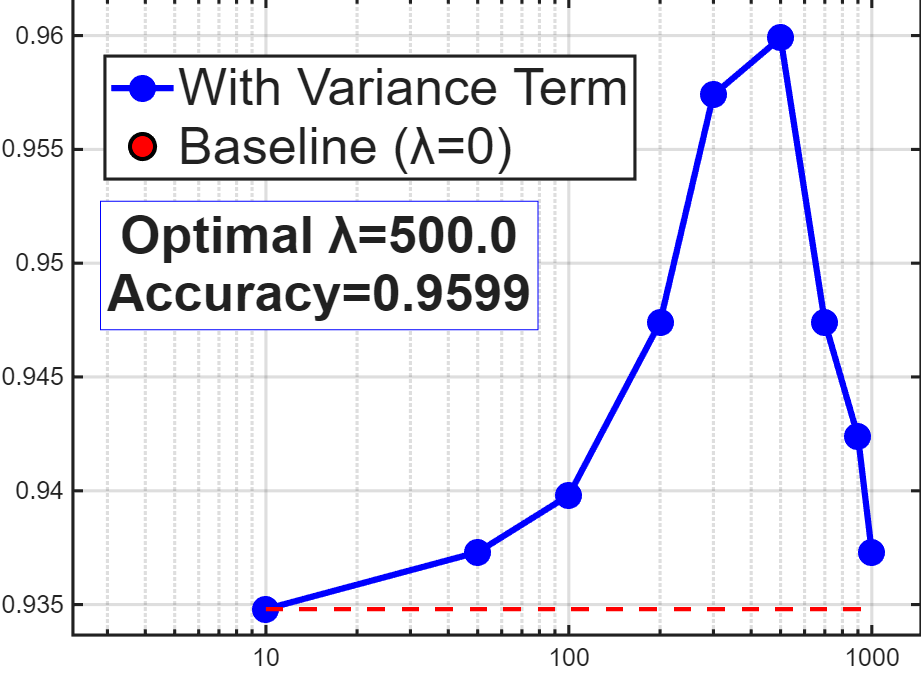} 
		} &
		\subfigure[Generalization gap vs $\lambda$]{
			\includegraphics[width=0.22\textwidth]{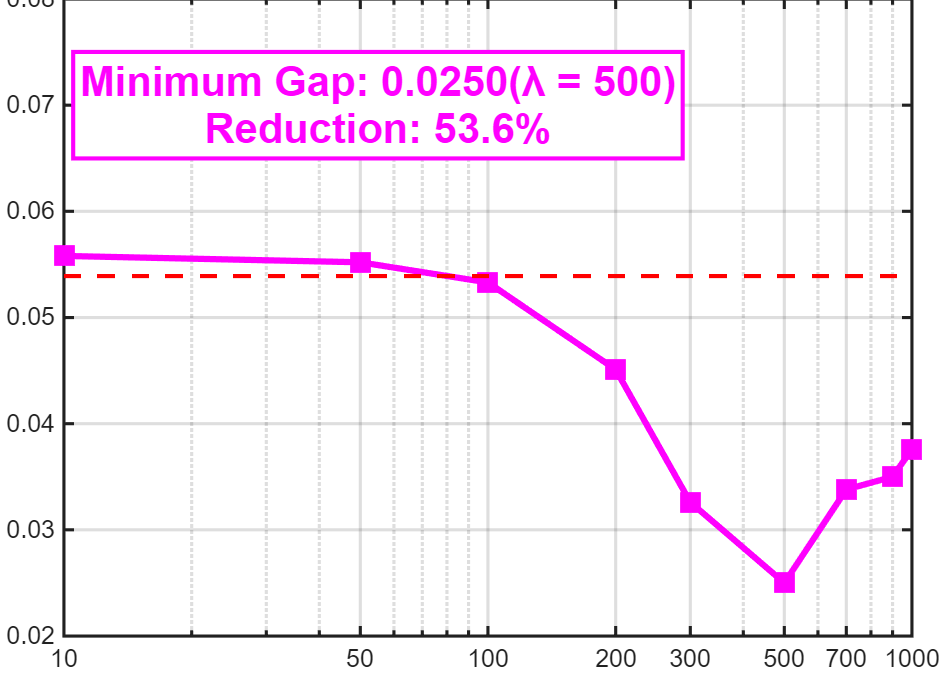} 
		} \\
		\subfigure[Training accuracy vs $\lambda$]{
			\includegraphics[width=0.22\textwidth]{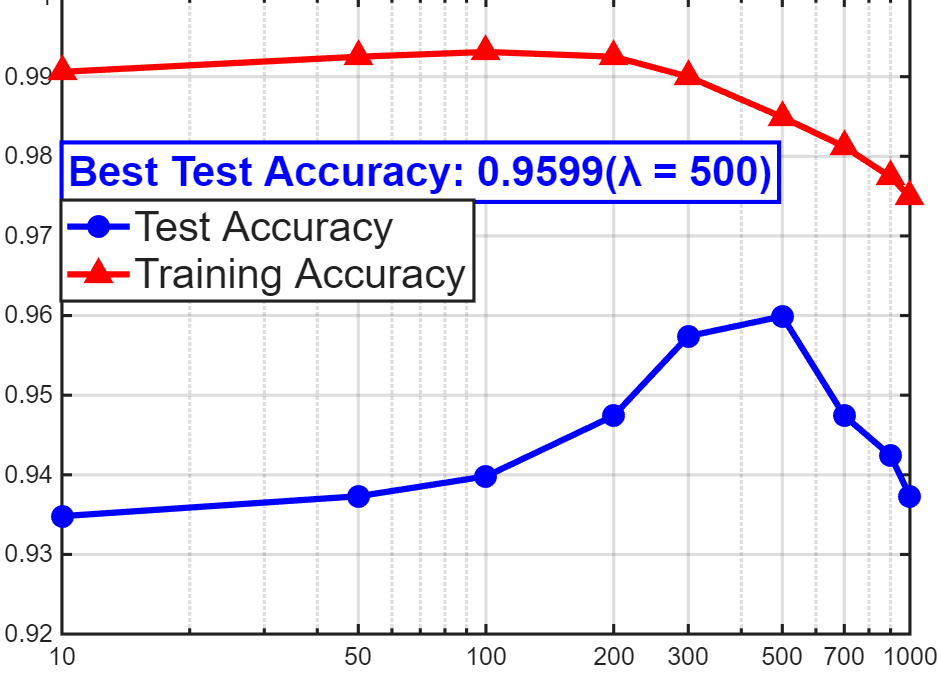} 
		} &
		\subfigure[Improvement vs $\lambda$]{
			\includegraphics[width=0.22\textwidth]{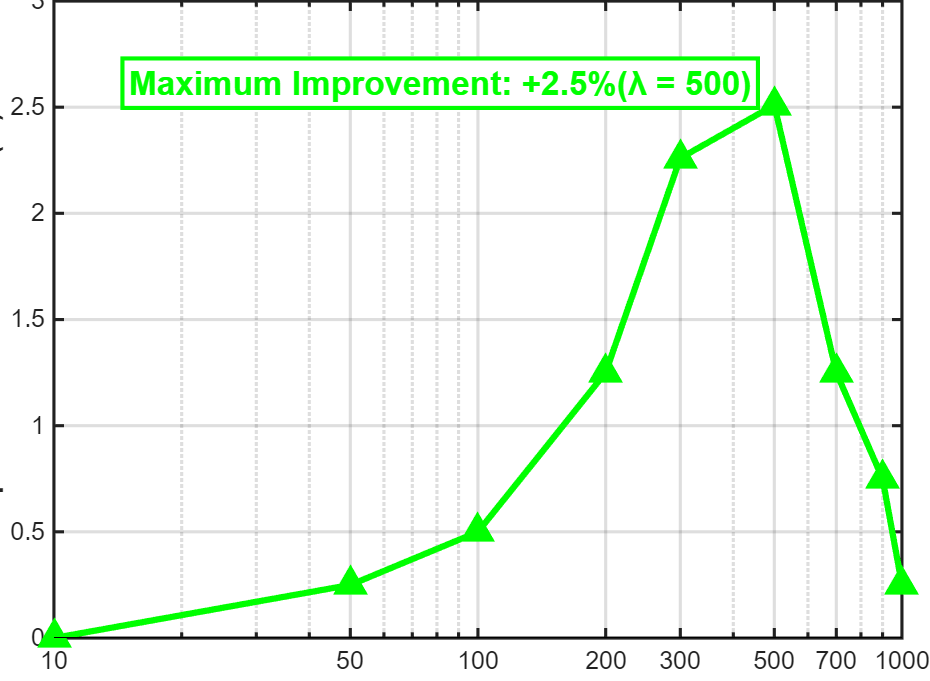}
		}
	\end{tabular}
	\caption{Sensitivity analysis of regularization parameter $\lambda$ on BASEHOCK dataset}
	\label{fig:lambda_sensitivity}
	\vspace{-\baselineskip} 
\end{figure}
\end{small}
Figure 2 demonstrates the learning curves of our ensemble method, indicating that our approach achieves competitive accuracy with remarkably limited training samples\cite{wooldridge_thirty-eighth_2024, wooldridge_thirty-eighth_2024-1}, highlighting its exceptional data efficiency and rapid convergence characteristics.

Figure 3 presents a comprehensive sensitivity analysis\cite{GAN2014269} of the regularization parameter $\lambda$ on the BASEHOCK dataset. The results reveal that an appropriately chosen $\lambda$ effectively enhances predictive accuracy while simultaneously reducing the generalization gap. However, excessively large $\lambda$ values may lead to underfitting\cite{montesinos_lopez_overfitting_2022}, manifesting as diminished performance on both training and test sets.

Overall, our method achieves superior performance with minimal computational resources, demonstrating strong generalization capability and robustness against overfitting across diverse datasets.
    \section{CONCLUSION}
\label{sec:conclusion}

In this paper, we propose a novel ensemble learning method that establishes a well-defined margin and incorporates both the mean and variance of the margin into the loss function. The approach employs Hadamard parameterization to simplify the optimization problem and leverages Riemannian gradients for effective solution.Experimental results demonstrate that our ensemble with only 10 learners exceeds the accuracy of 100-tree random forests, while also showing faster training and greater resilience to overfitting. We plan to investigate extensions to broader families of models and more challenging learning settings in subsequent research.

    \vfill\pagebreak
    \bibliographystyle{IEEEtran}
    \bibliography{ref}

\end{document}